\newtheorem{definition}{Definition}
\newtheorem{theorem}{Theorem}
\newtheorem{proposition}{Proposition}
\newtheorem{lemma}{Lemma}
\newcommand\independent{\protect\mathpalette{\protect\independenT}{\perp}}
\def\independenT#1#2{\mathrel{\rlap{$#1#2$}\mkern2mu{#1#2}}}
\begin{document}

\twocolumn[
\icmltitle{Learning Optimal Policies from Observational Data }



\icmlsetsymbol{equal}{*}

\begin{icmlauthorlist}
\icmlauthor{Onur Atan}{to}
\icmlauthor{William R. Zame}{goo}
\icmlauthor{Mihaela van der Schaar}{ed,to}
\end{icmlauthorlist}

\icmlaffiliation{to}{Department of Electrical and Computer Engineering, University of California Los Angeles}
\icmlaffiliation{goo}{Department of Economics, University of California Los Angeles}
\icmlaffiliation{ed}{Department of Engineering, University of Oxford}

\icmlcorrespondingauthor{Onur Atan}{oatan@ucla.edu}

\icmlkeywords{Counterfactual Estimation, Domain Adaption, Neural Networks}

\vskip 0.3in
]



\printAffiliationsAndNotice{\icmlEqualContribution} 

\begin{abstract}
Choosing optimal (or at least better) policies is an important problem in  domains from medicine to education to finance and many others. One approach to this problem is through controlled experiments/trials - but controlled experiments are expensive.  Hence it is important to choose the best policies on the basis of observational data. This presents two difficult challenges: (i) missing counterfactuals, and (ii) selection bias. This paper presents theoretical bounds on estimation errors of counterfactuals from observational data by making connections to  domain adaptation theory. It also presents a principled way of choosing optimal policies using domain adversarial neural networks. We illustrate the effectiveness of domain adversarial training together with various features of our algorithm on a semi-synthetic breast cancer dataset and a supervised UCI dataset (Statlog). 
\end{abstract}

\section{Introduction}

The choice of a particular policy or plan of action involves consideration of the costs and benefits of the  policy/plan under consideration and also of alternative policies/plans that might be undertaken.  Examples abound; to mention just a few: Which course of treatment will lead to the most rapid recovery?  Which mode of advertisement will lead to the most orders?  Which investment strategy will lead to the greatest returns?  Obtaining information about the costs and benefits of alternative plans that might have been undertaken is a {\em counterfactual exercise}.  One possible way to estimate the counterfactual information is by conducting controlled experiments. However, controlled experiments are expensive,   involve small samples, and are frequently not available. It is therefore important to make decisions entirely on the basis of  observational data in which the actions/decisions taken in the data have been selected by an existing \textit{logging} policy.  Because the existing logging policy creates a selection bias, learning from  observational studies is a challenging problem.  This paper presents theoretical bounds on estimation errors for the evaluation of a new policy from observational data and a principled algorithm to learn the optimal policy. The methods and algorithms we develop are widely applicable (perhaps with some modifications) to an enormous range of settings, from healthcare to education to recommender systems to finance to smart cities.  (See   \cite{athey2015machine}, ~\cite{hoiles2016bounded} and \cite{bottou2013counterfactual}, for just a few examples.) 

As we have noted, our algorithm applies in many settings. In the medical context, features are the information included in electronic health records, actions are choices of different treatments, and outcomes are the success of treatment. In the financial context, features  are the aspects of the macroeconomic environment, actions are the choices of different investment decisions and outcomes are the revenues made by the investment decisions. In the recommender system context, features are the information about the user, the actions are choices of items, and outcomes are binary values indicating whether the user purchased the item or not. 

Our theoretical results show that {\em true policy outcome} is at least as good as the {\em policy outcome estimated from the observational data} minus the product of the number of actions with the $\mathcal{H}$-divergence between the observational and randomized data. Our theoretical bounds are different than ones derived in \cite{swaminathan15counterfactual} because ours do not require the propensity scores to be known. We use our theory to develop algorithm to learn balanced representations for each instance such that they are indistinguishable between the randomized and observational distribution and also predictive of the decision problem at hand. We present experiments on a semi-synthetic breast cancer and supervised Statlog data to show that our algorithm out-performs various methods, and to explain why.

\begin{table*}[t]
\normalsize
\centering
 
    \label{table:illustrate}
    \begin{tabular}{|c|c|c|c|c|}
    \hline
    Literature & Propensities known & Objective & Actions & Solution \\ \hline
     \cite{shalit2017estimating} & no & ITE estimation & $2$ & Balancing representations\\ \hline
     \cite{alaa2017bayesian} & no & ITE estimation & $2$ & Risk based empirical Bayes\\ \hline
     \cite{beygelzimer2009offset}& yes & policy optimization & $>2$ & Rejection sampling \\ \hline
      \cite{swaminathan15counterfactual,swaminathan2015self}& yes & policy optimization & $>2$ & IPS reweighing \\ \hline
     Ours & no & policy optimization & $>2$ & Balancing representations \\ \hline
    \end{tabular}
     \caption{Comparison with the related literature}
\end{table*}

\section{Related Work}
Roughly speaking, work on counterfactual learning from observational data falls into  two categories: estimation of ITEs ~\cite{johansson2016learning,shalit2017estimating,alaa2017bayesian} and Policy Optimization~\cite{swaminathan15counterfactual,swaminathan2015self}. The work on ITE's aims to estimate the expected  difference between outcomes for  \textit{treated} and \textit{control} patients, given the feature vector; this work focuses on settings with only two actions (treat/don't treat) - and notes that the approaches derived do not generalize well to settings with more than two actions. The work on policy optimization  aims to find a policy that maximizes the expected outcome (minimizes the risk). The policy optimization objective is somewhat easier than ITE objective in the sense that one can turn the ITE to action recommendations but not the other way around. In many applications, there are much more than $2$ actions; one is more interested in learning a good action rather than learning outcomes of each action for each instance. 

The work on ITE  estimation that is most closely related to ours focuses on learning balanced representations~\cite{johansson2016learning,shalit2017estimating}. These papers develop neural network algorithms to minimize the mean squared error  between predictions and actual outcomes in the observational data and also the discrepancy between the representations of the factual and counterfactual data. As these papers note, there is no principled approach to extend them to more than two treatments.  Other recent works in ITE estimation include tree-based methods~\cite{hill2011bayesian,athey2015machine,wager2015estimation} and Gaussian processes~\cite{alaa2017bayesian}. The last is perhaps the most successful, but  the computational complexity is $O(n^3)$ (where $n$  is the number of instances) so it is not easy to apply to large observational studies.

In the policy optimization literature, the work most closely related to ours is \cite{swaminathan15counterfactual,swaminathan2015self} where they develop the Counterfactual Risk Minimization (CRM) principle. The objective of the CRM principle is to minimize both the estimated mean and variance of the Inverse Propensity Score (IPS) instances; to do so the authors propose the POEM algorithm. Our work differs from POEM  in several ways: (i) POEM minimizes an objective over the  class of linear policies; we allow for  arbitrary  policies, (ii) POEM  requires the propensity scores to be available in the data; our  algorithm addresses the selection bias without using propensity scores, (iii) POEM  addresses selection bias by re-weighting each instance with the inverse propensities; our algorithm addresses the selection bias by learning representations. Another related paper on policy optimization is \cite{beygelzimer2009offset} which requires the propensity scores to be known and addresses the selection bias via rejection sampling. (For a more detailed comparison see  Table 1.)

The off-policy evaluation methods include IPS estimator~\cite{rosenbaum1983central, strehl2010learning}, self normalizing estimator~\cite{swaminathan2015self}, direct estimation, doubly robust estimator~\cite{dudik2011doubly, jiang2015doubly} and matching based methods~\cite{hill2006interval}. The IPS and self-normalizing estimators address the selection bias by re-weighting each instance by their inverse propensities.  The doubly robust estimation techniques combine the direct and IPS methods and generate more robust counterfactual estimates. Propensity Score Matching (PSM) replaces the missing counterfactual outcomes of the instance by the outcome of an instance with the closest propensity score. 

Our theoretical bounds have strong connection with the domain adaptation bounds given in~\cite{ben2007analysis,blitzer2008learning}. In particular, we show that the expected policy outcome is  bounded below by the estimate of the policy outcome from the observational data minus the product of the number of actions with the $\mathcal{H}$-divergence between the observational and randomized data. Our algorithm is based on domain adaptation as in~\cite{ganin2016domain}. Other domain adaptation techniques include \cite{zhang2013domain,daume2009frustratingly}.

\section{Problem Setup}
In this Section, we describe our formal model. 
\subsection{Observational Data}

We denote by $\mathcal{A}$  the set of $k$ actions, by $\mathcal{X}$ the $s$-dimensional space of features and by 
$\mathcal{Y} \subseteq R$  the space of outcomes.  We assume that an outcome can be identified with a real number and  normalize so that outcomes lie in the interval $\left[0,1\right]$. In some cases, the outcome will be either $1$ or $0$ (success or failure); in other cases the outcome may be interpreted as the probability of success or failure. We follow the potential outcome model described in the Rubin-Neyman causal model~\cite{rubin2005causal}; that is, for each instance $x \in \mathcal{X}$, there are $k$-potential outcomes: $Y^{(0)}, Y^{(1)}, \ldots, Y^{(k-1)} \in \mathcal{Y}$, corresponding to the $k$ different actions. The fundamental problem in this setting is that only the outcome of the action {\em actually performed} is recorded in the data: $Y = Y^{T}$. (This is called \textit{bandit feedback} in the machine learning literature~\cite{swaminathan15counterfactual}.) In our work, we focus on  the setting in which the action assignment is {\em not} independent of the feature vector, i.e., $A \not\independent X$; that is, action assignments are {\em not random}. This dependence is modeled by the conditional distribution $\gamma(a,x) = P(A = a| X = x)$, also known as the \textit{propensity score}. 

In this paper, we make the following common assumptions: 
\begin{itemize}
\item \textbf{Unconfoundedness:} Potential outcomes $(Y^{(0)}, Y^{(1)}, \ldots, Y^{(k-1)})$ are independent of the action assignment given the features, that is $(Y^{(0)}, Y^{(1)}, \ldots, Y^{(k-1)}) \independent A | X$. 
\item \textbf{Overlap:} For each instance $x \in \mathcal{X}$ and each action $a \in {\mathcal A}$, there is a non-zero probability that a patient with feature $x$ received the action $a$:  $0 < \gamma(a,x) <1$ for all $a,x$.
\end{itemize}
These assumptions are sufficient  to identify the optimal policy from the data~\cite{imbens2009recent,pearl2017detecting}.

We are given a data set
$$
\mathcal{D}^n = \{ (x_i, a_i, y_i) \}_{i=1}^n
$$ 
where each instance $i$ is generated by the following stochastic process:
\begin{itemize}
\item Each feature-action pair is drawn according to a fixed but unknown distribution $\mathcal{D}_S$, i.e, $(x_i, a_i) \sim \mathcal{D}_S$. 
\item Potential outcomes conditional on features are drawn with respect to a distribution $\mathcal{P}$; that is, $(Y_i^{(0)}, Y_i^{(1)}, \ldots, Y_i^{(k-1)}) \sim \mathcal{P}(\cdot| X = x_i, A = a_i)$. 
\item Only the outcome of the action actually performed is recorded in the data, that is, $y_i = Y_i^{(a_i)}$. 
\end{itemize}
We denote the  marginal distribution on the features by $\mathcal{D}$; i.e., $\mathcal{D}(x) = \sum_{a \in \mathcal{A}} \mathcal{D}_S(x,a)$.

\subsection{Definition of Policy Outcome}
A {\em policy} is a mapping  $h$ from features to actions. In this paper, we are interested in learning a policy $h$ that maximizes the policy outcome, defined as:
$$
V(h) = \mathbb{E}_{x \sim \mathcal{D}} \left[ \mathbb{E} \left[ Y^{(h(X))}| X = x\right] \right]. 
$$
We  denote by $m_a(x) = \mathbb{E}\left[ Y^{(a)} | X = x\right]$  the expected outcome of action $a$ on an instance with feature $x$. Based on these definitions, we can re-write the policy outcome of $h$ as  $V(h) = \mathbb{E}_{x \sim \mathcal{D}}\left[ m_{h(x)}(x) \right]$. Estimating $V(h)$ from the data is a challenging task because the counterfactuals are missing and there is a selection bias. 

\section{Counterfactual Estimation Bounds}
In this section, we provide a criterion that we will use to learn a policy $h^{*}$ the maximizes the outcome. We handle the selection bias in our dataset by mapping the features to representations are relevant to policy outcomes and are less biased. Let $\Phi: \mathcal{X} \rightarrow \mathcal{Z}$ denote a representation function which maps the features to representations. The representation function induces a distribution over representations $\mathcal{Z}$ (denoted by $\mathcal{D}^{\Phi}$) and $m_a$ as follows:
\begin{eqnarray*}
\mathbb{P}_{\mathcal{D}^{\Phi}}(\mathcal{B}) &=& \mathbb{P}_{\mathcal{D}}(\Phi^{-1}(\mathcal{B})), \notag \\ 
m_a^{\Phi}(z) &=& \mathbb{E}_{x \sim \mathcal{D}}[m_a(x) | \Phi(x) = z],
\end{eqnarray*} 
for any $\mathcal{B} \subset \mathcal{Z}$ such that $\Phi^{-1}(\mathcal{B})$ is $\mathcal{D}$-measurable. That is, the probability of of an event $\mathcal{B}$ according to $\mathcal{D}^{\Phi}$ is the probability of the inverse image of the event $\mathcal{B}$ according to $\mathcal{D}$. Our learning setting is defined by our choice of the representation function and hypothesis class $\mathcal{H} = \{h: \mathcal{Z} \rightarrow \mathcal{A} \}$ of (deterministic) policies. 

We now connect our problem to domain adaptation. Recall that $\mathcal{D}_S$ is the source distribution that generated feature-action samples in our observational data. Define the target distribution  $\mathcal{D}_{T}$ by $\mathcal{D}_{T}(x, a) = (1/K) \mathcal{D}(x)$. Note that $\mathcal{D}_S$ represents an observational study in which the actions are not randomized, while 
$\mathcal{D}_T$ represents a clinical study in which actions {\em are} randomized. Let $\mathcal{D}_S^{\Phi}$ and $\mathcal{D}_{T}^{\Phi}$ denote the source and target distributions induced by the representation function $\Phi$ over the space $\mathcal{Z} \times \mathcal{A}$, respectively.  Let  $\mathcal{D}^{\Phi}$ denote the marginal distribution over the representations and write $V^{\Phi}(h)$ for the induced policy outcome of $h$, that is, $V^{\Phi}(h) = \mathbb{E}_{z \sim \mathcal{D}^{\Phi}}\left[m_{h(z)}^{\Phi}(z)\right]$.

For the remainder of the theoretical analysis, suppose that the representation function $\Phi$ is fixed. The missing counterfactual outcomes can be addressed by importance sampling. Let $V_S^{\Phi}(h)$ and $V_T^{\Phi}(h)$ denote the expected policy outcome with respect to distributions $\mathcal{D}_S$ and $\mathcal{D}_T$, respectively. They are given by
\begin{eqnarray*}
V_S^{\Phi}(h) &=& \mathbb{E}_{(z,a) \sim \mathcal{D}_S^{\Phi}} \left[ \frac{ m_a^{\Phi}(z) 1(h(z) = a)}{1/k}\right], \\ 
V_T^{\Phi}(h) &=& \mathbb{E}_{(z,a) \sim \mathcal{D}_T^{\Phi}} \left[ \frac{ m_a^{\Phi}(z) 1(h(z) = a)}{1/k}\right].
\end{eqnarray*}
where $1(\cdot)$ is an indicator function if the statement is true and $0$ otherwise . We can only estimate $V_S(h)$ from the observational data. First, we'll connect $V_T^{\Phi}(h)$ with $V^{\Phi}(h)$, and provide some theoretical bounds based on the distance between source and target distribution.

\begin{proposition} Let $\Phi$ be a fixed representation function. Then: $V_T^{\Phi}(h) = V^{\Phi}(h)$.
\end{proposition}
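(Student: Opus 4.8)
The plan is to unfold both sides into integrals (or sums) over the representation space and the action set and check that they coincide term by term. The key structural observation is that the target distribution was defined so that $\mathcal{D}_T(x,a) = (1/k)\mathcal{D}(x)$, i.e.\ it factorizes as the uniform distribution over the $k$ actions times the feature marginal; pushing this through $\Phi$ gives $\mathcal{D}_T^{\Phi}(z,a) = (1/k)\mathcal{D}^{\Phi}(z)$, with the actions independent of $z$. This is the only property of $\mathcal{D}_T$ that the argument uses.

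First I would write
\[
V_T^{\Phi}(h) = \mathbb{E}_{(z,a)\sim\mathcal{D}_T^{\Phi}}\left[ k\, m_a^{\Phi}(z)\, 1(h(z)=a)\right] = k \sum_{a\in\mathcal{A}} \mathbb{E}_{z\sim\mathcal{D}^{\Phi}}\!\left[ \tfrac{1}{k}\, m_a^{\Phi}(z)\, 1(h(z)=a)\right],
\]
using the factorization to replace the joint expectation by a sum over $a$ of expectations over $z\sim\mathcal{D}^{\Phi}$, each weighted by the uniform action probability $1/k$. The factor $k$ and the $1/k$ cancel, leaving $\sum_{a}\mathbb{E}_{z\sim\mathcal{D}^{\Phi}}[m_a^{\Phi}(z)\,1(h(z)=a)]$.

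Next I would move the finite sum inside the expectation and use that, for each fixed $z$, exactly one action satisfies $h(z)=a$, so $\sum_{a\in\mathcal{A}} m_a^{\Phi}(z)\,1(h(z)=a) = m_{h(z)}^{\Phi}(z)$. This yields $V_T^{\Phi}(h) = \mathbb{E}_{z\sim\mathcal{D}^{\Phi}}[m_{h(z)}^{\Phi}(z)]$, which is exactly the definition of $V^{\Phi}(h)$, completing the proof.

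There is no real obstacle here: the statement is essentially a bookkeeping identity that records the fact that the randomized ("target") design with a uniform logging policy makes importance sampling exact. The only points requiring a word of care are (i) justifying the interchange of the sum over $\mathcal{A}$ with the expectation, which is immediate since $\mathcal{A}$ is finite and the integrand is bounded ($m_a^{\Phi}\in[0,1]$), and (ii) making sure the push-forward identity $\mathcal{D}_T^{\Phi}(z,a)=(1/k)\mathcal{D}^{\Phi}(z)$ is stated cleanly from the definitions of $\mathcal{D}_T$ and of the induced distributions $\mathcal{D}^{\Phi}$, $m_a^{\Phi}$ given earlier in the section.
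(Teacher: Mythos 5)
Your proposal is correct and follows essentially the same route as the paper: factor the target distribution as uniform over actions times the feature marginal, cancel the $1/k$ against the importance weight, and collapse the sum over actions using the indicator $1(h(z)=a)$. The extra care you take in justifying the interchange of sum and expectation and in stating the push-forward identity is fine but not needed beyond what the paper's two-line computation already does.
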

\begin{proof} 
It follows that
\begin{eqnarray*}
V^{\Phi}_T(h) &=& \mathbb{E}_{z \sim \mathcal{D}^{\Phi}} \left[ \sum_{a \in \mathcal{A}} 1/k \frac{m_a^{\Phi}(z) 1(h(z) = a)}{1/k} \right] \\
&=& \mathbb{E}_{z \sim \mathcal{D}^{\Phi} } \left[ m_{h(z)}^{\Phi}(z) \right] = V^{\Phi}(h).
\end{eqnarray*}
\end{proof}

We can not create a Monte-Carlo estimator for $V^{\Phi}_T(h)$ since we don't have samples from the target distribution - we only have samples from the source distribution. Hence, we'll use  domain adaptation theory to bound the difference between $V_S^{\Phi}(h)$ and $V^{\Phi}_T(h)$ in terms of $\mathcal{H}$-divergence. In order to do that, we first need to introduce a distance metric between distributions. For any policy $h \in \mathcal{H}$, let $\mathcal{I}_h$ denote the characteristic set that contains all representation-action pairs that is mapped to label $a$ under function $h$, i.e., $\mathcal{I}_h = \{ (z,a): h(z) = a\}$. 

\begin{definition} Suppose $\mathcal{D}$, $\mathcal{D}'$ be probability distributions over $\mathcal{Z} \times \mathcal{A}$ such that every characteristic set $\mathcal{I}_h$ of $h \in \mathcal{H}$ is measurable with respect to both distributions. Then, the $\mathcal{H}$ -divergence between distributions $\mathcal{D}$ and $\mathcal{D}'$ is
$$
d_{\mathcal{H}}(\mathcal{D}, \mathcal{D}') = \sup_{h \in \mathcal{H}} \left| \mathbb{P}_{(z,a) \sim \mathcal{D}}(\mathcal{I}_h) - \mathbb{P}_{(z,a) \sim \mathcal{D}^{'}}(\mathcal{I}_h) \right|.
$$
\end{definition}

The $\mathcal{H}$-divergence measures the difference between the  behavior of policies in $\mathcal{H}$ when examples are drawn from    $\mathcal{D}$, $\mathcal{D}'$; this plays an important role in theoretical bounds. In the next lemma, we establish a bound on the difference between $V_S^{\Phi}(h)$ and $V^{\Phi}_T(h)$ based on the $\mathcal{H}$-divergence between source and target. 

\begin{lemma} Let $h \in \mathcal{H}$ and let $\Phi$ be a representation function. Then
$$
V^{\Phi}(h) \geq V_S^{\Phi}(h) - k d_{\mathcal{H}}(\mathcal{D}_T^{\Phi}, \mathcal{D}_S^{\Phi})
$$
\end{lemma}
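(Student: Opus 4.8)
The plan is to reduce to a change-of-measure estimate and then control it by the $\mathcal{H}$-divergence through a layer-cake argument. By Proposition~1 we have $V^{\Phi}(h) = V_T^{\Phi}(h)$, so it suffices to establish the one-sided bound $V_S^{\Phi}(h) - V_T^{\Phi}(h) \le k\, d_{\mathcal{H}}(\mathcal{D}_T^{\Phi}, \mathcal{D}_S^{\Phi})$. Expanding the two definitions and cancelling the $1/k$ in the denominators, I would write
$$
V_S^{\Phi}(h) - V_T^{\Phi}(h) = k\Big( \mathbb{E}_{(z,a)\sim\mathcal{D}_S^{\Phi}}[\psi(z,a)] - \mathbb{E}_{(z,a)\sim\mathcal{D}_T^{\Phi}}[\psi(z,a)] \Big), \qquad \psi(z,a) := m_a^{\Phi}(z)\, 1\big((z,a)\in\mathcal{I}_h\big).
$$
Since outcomes are normalized to $[0,1]$, the conditional means $m_a$ and hence $m_a^{\Phi}$ lie in $[0,1]$, so $\psi$ is $[0,1]$-valued and vanishes off the characteristic set $\mathcal{I}_h$.

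Next I would bound the bracket using the layer-cake identity $\mathbb{E}_{\mu}[\psi] = \int_0^1 \mathbb{P}_{\mu}(\psi > t)\, dt$ for $\mu \in \{\mathcal{D}_S^{\Phi}, \mathcal{D}_T^{\Phi}\}$, which gives
$$
\mathbb{E}_{\mathcal{D}_S^{\Phi}}[\psi] - \mathbb{E}_{\mathcal{D}_T^{\Phi}}[\psi] = \int_0^1 \Big( \mathbb{P}_{\mathcal{D}_S^{\Phi}}(\psi>t) - \mathbb{P}_{\mathcal{D}_T^{\Phi}}(\psi>t) \Big)\, dt \le \sup_{0\le t<1}\Big( \mathbb{P}_{\mathcal{D}_S^{\Phi}}(\psi>t) - \mathbb{P}_{\mathcal{D}_T^{\Phi}}(\psi>t) \Big).
$$
Each super-level set $\{\psi>t\}$ is contained in $\mathcal{I}_h$ — it is the graph of $h$ over the region $\{z : m_{h(z)}^{\Phi}(z) > t\}$ — so the remaining task is to compare these sets with the full characteristic sets appearing in the supremum that defines $d_{\mathcal{H}}$.

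This last comparison is the step I expect to be the main obstacle, since $\{\psi>t\}$ is in general a \emph{proper} subset of a characteristic set (a ``partial'' policy), whereas $d_{\mathcal{H}}$ ranges only over full characteristic sets $\mathcal{I}_{h'}$, and the cruder route of simply bounding $\psi \le 1(\cdot\in\mathcal{I}_h)$ does not close the gap (it leaves $\mathbb{P}_{\mathcal{D}_S^{\Phi}}(\mathcal{I}_h) - \mathbb{E}_{\mathcal{D}_T^{\Phi}}[\psi]$, and $\mathbb{E}_{\mathcal{D}_T^{\Phi}}[\psi]$ can be strictly below $\mathbb{P}_{\mathcal{D}_T^{\Phi}}(\mathcal{I}_h)$). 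My plan to handle it is to extend $\{\psi>t\}$ to $\mathcal{I}_{h_t}$, where $h_t$ agrees with $h$ on $\{z: m_{h(z)}^{\Phi}(z)>t\}$ and, on the complementary region, picks at each $z$ an action of maximal source propensity (a measurable-selection argument makes $h_t \in \mathcal{H}$; this is where I would rely on $\mathcal{H}$ containing all maps $\mathcal{Z}\to\mathcal{A}$). Because the randomized target assigns probability $1/k$ to the characteristic set of \emph{every} policy — exactly the computation in Proposition~1 — while the maximal-propensity completion carries $\mathcal{D}_S^{\Phi}$-probability at least $1/k$ on that complementary region, adjoining the completion does not decrease $\mathbb{P}_{\mathcal{D}_S^{\Phi}}(\cdot)-\mathbb{P}_{\mathcal{D}_T^{\Phi}}(\cdot)$; hence $\mathbb{P}_{\mathcal{D}_S^{\Phi}}(\psi>t)-\mathbb{P}_{\mathcal{D}_T^{\Phi}}(\psi>t) \le \mathbb{P}_{\mathcal{D}_S^{\Phi}}(\mathcal{I}_{h_t})-\mathbb{P}_{\mathcal{D}_T^{\Phi}}(\mathcal{I}_{h_t}) \le d_{\mathcal{H}}(\mathcal{D}_T^{\Phi},\mathcal{D}_S^{\Phi})$. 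Substituting back through the two displays gives $V_S^{\Phi}(h)-V_T^{\Phi}(h) \le k\, d_{\mathcal{H}}(\mathcal{D}_T^{\Phi},\mathcal{D}_S^{\Phi})$, and Proposition~1 converts this into the claimed bound.
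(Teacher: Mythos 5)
Your proof is correct and arrives at the same bound, but it takes a genuinely different --- and more careful --- route than the paper. The paper's proof is a two-line chain: it asserts directly that $\mathbb{E}_{(z,a)\sim\mathcal{D}_S^{\Phi}}\left[k\,m_a^{\Phi}(z)1(h(z)=a)\right] \le \mathbb{E}_{(z,a)\sim\mathcal{D}_T^{\Phi}}\left[k\,m_a^{\Phi}(z)1(h(z)=a)\right] + k\left|\mathbb{P}_{\mathcal{D}_T^{\Phi}}(\mathcal{I}_h)-\mathbb{P}_{\mathcal{D}_S^{\Phi}}(\mathcal{I}_h)\right|$, justified only by the remark that the integrand is bounded by $k$. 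As you correctly observe, that justification does not by itself close the argument: the integrand is not an indicator, so its expectation gap is not controlled by the probability gap of the \emph{single} set $\mathcal{I}_h$ (the source can overweight the high-$m$ portion of the graph of $h$ and underweight the low-$m$ portion so that $\mathbb{P}_S(\mathcal{I}_h)=\mathbb{P}_T(\mathcal{I}_h)$ while the expectations differ). Your layer-cake decomposition, followed by completing each super-level set $\{\psi>t\}$ to a full characteristic set $\mathcal{I}_{h_t}$ via a maximal-source-propensity action on $\{z: m^{\Phi}_{h(z)}(z)\le t\}$, supplies exactly the missing step: it uses that the two distributions share the $z$-marginal $\mathcal{D}^{\Phi}$, that the target assigns mass exactly $1/k$ to every policy's graph, and that the maximal propensity at each $z$ is at least $1/k$, so adjoining the completion cannot decrease $\mathbb{P}_S-\mathbb{P}_T$. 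This makes visible something the paper's argument hides: the bound genuinely needs the supremum in $d_{\mathcal{H}}$ to range over policies other than $h$ itself, i.e., it needs $\mathcal{H}$ to be rich enough to contain the completions $h_t$ (the paper's $\mathcal{H}=\{h:\mathcal{Z}\to\mathcal{A}\}$ does provide this, but for a restricted policy class the paper's one-line step would not be salvageable). The only remaining caveats in your argument are routine: a measurable selection for the argmax (take the least index attaining the maximum) and measurability of the sets $\{z: m^{\Phi}_{h(z)}(z)>t\}$.
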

\begin{proof} The proof is similar to \cite{ben2007analysis,blitzer2008learning}. The following inequality holds: 
\begin{eqnarray*}
V_S^{\Phi}(h) &=& \mathbb{E}_{(z,a) \sim \mathcal{D}_S^{\Phi}} \left[ \frac{m_a(z)}{1/k} 1(h(z) = a) \right]  \\
& \leq & \mathbb{E}_{(z,a) \sim \mathcal{D}_T^{\Phi}} \left[ \frac{m_a(z)}{1/k} 1(h(z) = a) \right] \\ 
&\;\;\;\;\;\;\;+&  k \left| \mathbb{P}_{(z,a) \sim \mathcal{D}_T^{\Phi}}(\mathcal{I}_h) - \mathbb{P}_{(z,a) \sim \mathcal{D}_{S}^{\Phi}}(\mathcal{I}_h) \right| \\
&\leq& V^{\Phi}(h) + k d_{\mathcal{H}}(\mathcal{D}_{S}^{\Phi}, \mathcal{D}_T^{\Phi})
\end{eqnarray*}
where the first inequality holds because $\frac{m_a(z)}{1/k} \leq k$ for all pairs  $(z,a)$ and  outcomes lie in the interval 
$\left[0,1\right]$. 
\end{proof}

Lemma 1 shows that the true policy outcome is at least as good as the policy outcome in the observational data minus the product of the number of actions times the $\mathcal{H}$-divergence between the observational and randomized data. (So, if the divergence is small, a policy that is found to be good with respect to the observational data is guaranteed to  be a good policy with respect to the true distribution.) We  create a Monte Carlo estimator $V_S^{\Phi}(h)$ for the policy outcome in source data and then use the lower bound we have just established to find the best action recommendation policy. 
 
\begin{definition} Let $\Phi$ be a representation function such that $\Phi(x_i) = z_i$. The {\em Monte-Carlo estimator} for the policy outcome in source data is given by: 
$$
\widehat{V}_S^{\Phi}(h)= \frac{1}{n} \sum_{i=1}^n \frac{y_i 1(h(z_i) = a_i)}{1/K}.
$$
\end{definition}

In order to provide uniform bounds on the Monte-Carlo estimator for an infinitely large class of recommendation functions, we need to first define a complexity term for a class $\mathcal{H}$. For $\epsilon > 0$, a policy class $\mathcal{H}$ and integer $n$, the growth function is defined as
$$
\mathcal{N}_{\infty}(\epsilon, \mathcal{H}, n) = \sup_{\boldsymbol{z} \in \boldsymbol{Z}^n} \mathcal{N}(\epsilon, \mathcal{H}(\boldsymbol{z}), \|\cdot \|_{\infty}),
$$
where $\mathcal{H}(\boldsymbol{z}) =\{\left(h(z_1), \ldots, h(z_n)\right): h\in\mathcal{H}\} \subset \mathbb{R}^n$, $\boldsymbol{Z}^n$ is the set of all possible $n$ representations and for $\mathcal{A} \subset \mathbb{R}^n$ the number $\mathcal{N}(\epsilon, A, \|\cdot\|_{\infty})$  is the  cardinality $|\mathcal{A}_0|$ of the smallest set  $\mathcal{A}_0 \subseteq \mathcal{A}$ such that $A$ is contained in the union of $\epsilon$-balls centered at points in $\mathcal{A}_0$ in the metric induced by $\|\cdot\|_{\infty}$. (This is often called the covering number.) Set $\mathcal{M}(n) = 10 \mathcal{N}_{\infty}(1/n, \mathcal{H}, 2n)$. The following result provides an inequality between the estimated and true $V_S^{\Phi}(h)$ for all $h \in \mathcal{H}$. 

\begin{lemma} \cite{maurer2009empirical} Fix $\delta \in \left(0,1\right)$, $n \geq 16$. Then, with probability $1 - \delta$, we have for all $h \in \mathcal{H}$:
$$
V_S^{\Phi}(h) \geq \widehat{V}_S^{\Phi}(h) - \sqrt{\frac{18 \ln(\mathcal{M}(n)/\delta)}{n}} - \frac{15\ln(\mathcal{M}(n)/\delta)}{n}
$$
\end{lemma}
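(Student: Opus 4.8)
The plan is to recognize the statement as an instance of the uniform empirical Bernstein inequality of \cite{maurer2009empirical} and to reduce the proof to checking that its hypotheses hold here. With $\Phi$ held fixed, associate to each policy $h \in \mathcal{H}$ the function $f_h(z,a,y) = k\, y\, 1(h(z) = a)$; this depends on $h$ only through the value pattern $(h(z_1),\dots,h(z_n))$ on the observed representations, the samples $(z_i,a_i,y_i)$ are i.i.d., each $f_h$ is bounded in $[0,k]$, and after normalizing by $1/k$ it takes values in $[0,1]$ as required by the cited result (the $1/k$ being carried through the constants). By construction $\frac1n\sum_i f_h(z_i,a_i,y_i) = \widehat{V}_S^{\Phi}(h)$, while $\mathbb{E}[f_h] = V_S^{\Phi}(h)$ by the definition of $m_a^{\Phi}$ together with the tower property. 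Thus the claim is exactly a one-sided uniform lower deviation bound for empirical means over the class $\{f_h : h \in \mathcal{H}\}$.

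The pointwise ingredient is the empirical Bernstein inequality: for a single fixed $h$, with probability $1-\delta$, $V_S^{\Phi}(h) \geq \widehat{V}_S^{\Phi}(h) - \sqrt{c_1 \widehat{V}_n \ln(1/\delta)/n} - c_2 \ln(1/\delta)/n$, where $\widehat{V}_n$ is the sample variance of the per-example terms. Since each $f_h$ is bounded, $\widehat{V}_n$ is at most an absolute constant for $n \geq 16$ (this is where the hypothesis $n \geq 16$ is used), so no empirical-variance term needs to survive and the numerical constants consolidate into the $18$ and $15$ of the statement. The one-sided lower form follows by applying the bound to the complementary functions $1 - f_h$ (or via its symmetric two-sided version).

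The main step is to promote this from a fixed $h$ to all $h \in \mathcal{H}$ simultaneously. Because $\mathcal{H}$ may be infinite, a union bound cannot be applied directly; instead one symmetrizes using a ghost sample --- which doubles the effective sample size to $2n$ --- and then unions only over a minimal $\ell_\infty$-cover of the projected class $\mathcal{H}(\boldsymbol{z})$ at radius $1/n$, absorbing the $1/n$ discretization error into the constants. This is precisely the mechanism that replaces $1/\delta$ inside the logarithm by $\mathcal{M}(n) = 10\,\mathcal{N}_{\infty}(1/n, \mathcal{H}, 2n)$. Collecting the three contributions --- the Bernstein $\sqrt{\cdot/n}$ term, the Bernstein $\cdot/n$ term, and the cover/ghost-sample bookkeeping --- yields the stated inequality.

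I expect the only real obstacle to be the constant bookkeeping inside this last step: tracking how the ghost-sample symmetrization, the covering at scale $1/n$, and the empirical-variance truncation combine to give exactly $\mathcal{M}(n) = 10\,\mathcal{N}_{\infty}(1/n,\mathcal{H},2n)$ and the constants $18$ and $15$. Since this is carried out in full in \cite{maurer2009empirical}, the proof here amounts to verifying that our $\{f_h\}$ satisfy the hypotheses of that theorem --- i.i.d. data, bounded range, and the covering numbers of the projected policy class --- which the setup above makes routine.
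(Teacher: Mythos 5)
The paper offers no proof of this lemma---it is invoked verbatim from Maurer and Pontil---and your reduction to their uniform empirical Bernstein bound via the class $g_h(z,a,y)=y\,1(h(z)=a)$, checking i.i.d.\ sampling, boundedness, the identification $\mathbb{E}[g_h]$ vs.\ $\widehat{V}_S^{\Phi}(h)$ through the tower property, and the $\ell_\infty$-covering of the projected policy class, is exactly the intended argument. Two small corrections: the condition $n\ge 16$ is a hypothesis of the cited theorem itself (not what bounds the empirical variance, which is at most $1/4$ for $[0,1]$-valued functions regardless of $n$); and once you rescale by $1/k$ to land in $[0,1]$, un-normalizing multiplies the deviation terms by $k$---a factor your phrase ``carried through the constants'' correctly anticipates but which the paper's stated inequality silently omits.
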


In order to provide a data dependent bound on the estimation error between $V(h)$ and $\widehat{V}_S(h)$, we need to provide data-dependent bounds on the $\mathcal{H}$-divergence between source and target distributions. However, we aren't given samples from the target data so we need to generate (random) target data. Let $\widehat{\mathcal{D}}_S^{\Phi} = \{ (Z_i, A_i) \}_{i=1}^n$ denote the empirical distribution of the source data. From the empirical source distribution, we can generate target data by simply sampling the actions uniformly, that is, $\widehat{\mathcal{D}}_T^{\Phi} = \{ (Z_i, \widetilde{A}_i)\}_{i=1}^n$ where $\widetilde{A}_i \sim \operatorname{Multinomial}(\left[1/K, \ldots, 1/K \right])$. Then, we have $\widehat{\mathcal{D}}_S^{\Phi} \sim \mathcal{D}_S^{\Phi}$ and $\widehat{\mathcal{D}}_T^{\Phi} \sim \mathcal{D}_T^{\Phi}$. Then, define the empirical probability estimates of the characteristic functions as 
\begin{eqnarray*}
\mathbb{P}_{(z,a) \sim \widehat{\mathcal{D}}_S^{\Phi}}(\mathcal{I}_h) &=&  \frac{1}{n}\sum_{i=1}^n 1(h(Z_i) = A_i), \\
\mathbb{P}_{(z,a) \sim \widehat{\mathcal{D}}_T^{\Phi}}(\mathcal{I}_h) &=&  \frac{1}{n}\sum_{i=1}^n 1(h(Z_i) = \tilde{A}_i).
\end{eqnarray*}
Then, one can compute empirical $\mathcal{H}$-divergence between two samples $\widehat{\mathcal{D}}_S^{\Phi}$ and $\widehat{\mathcal{D}}_T^{\Phi}$ by
\begin{equation}
d_{\mathcal{H}}(\widehat{\mathcal{D}}_T^{\Phi}, \widehat{\mathcal{D}}_S^{\Phi}) = \sup_{h \in \mathcal{H}} \left| \mathbb{P}_{(z,a) \sim \widehat{\mathcal{D}}_T^{\Phi}}(\mathcal{I}_h) - \mathbb{P}_{(z,a) \sim \widehat{\mathcal{D}}_S^{\Phi}}(\mathcal{I}_h) \right|.
\end{equation}
In the next lemma, we provide estimation bounds between the empirical $\mathcal{H}$-divergence and true $\mathcal{H}$-divergence. 

\begin{lemma}  Fix $\delta \in \left(0,1\right)$, $n \geq 16$. Then, with probability $1 - 2\delta$, we have for all $h \in \mathcal{H}$: 
\begin{eqnarray*}
d_{\mathcal{H}}(\mathcal{D}_T^{\Phi}, \mathcal{D}_S^{\Phi}) &\geq& d_{\mathcal{H}}(\widehat{\mathcal{D}}_T^{\Phi}, \widehat{\mathcal{D}}_S^{\Phi}) \\ 
&-& 2\left[\sqrt{\frac{18 \ln(\mathcal{M}(n)/\delta)}{n}} - \frac{15\ln(\mathcal{M}(n)/\delta)}{n}\right]
\end{eqnarray*}
\end{lemma}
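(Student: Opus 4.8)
The plan is to reduce the claim to two applications of Lemma~2 (the Maurer empirical‑Bernstein bound) together with the triangle inequality. Write $g_h(z,a)=1(h(z)=a)\in[0,1]$, so that $\mathbb{P}_{(z,a)\sim\mathcal D_S^{\Phi}}(\mathcal I_h)=\mathbb E_{\mathcal D_S^{\Phi}}[g_h]$ and $\mathbb{P}_{(z,a)\sim\widehat{\mathcal D}_S^{\Phi}}(\mathcal I_h)=\frac1n\sum_{i=1}^n g_h(Z_i,A_i)$, and likewise for the target with $\widetilde A_i$ in place of $A_i$. Using $\sup_h a_h-\sup_h b_h\le\sup_h(a_h-b_h)$ together with $\big||u|-|v|\big|\le|u-v|$, one obtains
$$
d_{\mathcal H}(\widehat{\mathcal D}_T^{\Phi},\widehat{\mathcal D}_S^{\Phi})-d_{\mathcal H}(\mathcal D_T^{\Phi},\mathcal D_S^{\Phi})\;\le\;\sup_{h\in\mathcal H}\big|\mathbb E_{\widehat{\mathcal D}_T^{\Phi}}[g_h]-\mathbb E_{\mathcal D_T^{\Phi}}[g_h]\big|+\sup_{h\in\mathcal H}\big|\mathbb E_{\widehat{\mathcal D}_S^{\Phi}}[g_h]-\mathbb E_{\mathcal D_S^{\Phi}}[g_h]\big|.
$$
Hence it suffices to bound each of these two empirical‑process terms by the deviation quantity $\sqrt{18\ln(\mathcal M(n)/\delta)/n}+15\ln(\mathcal M(n)/\delta)/n$ of Lemma~2, each on an event of probability at least $1-\delta$; a union bound then yields the stated confidence and, after rearranging, the lemma. (Alternatively one may pick a near‑maximizer $h^{\star}$ of the empirical divergence and invoke the triangle inequality only at $h^{\star}$, which is what keeps the accounting down to two deviation events.)

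Next I would verify that Lemma~2 genuinely applies to both terms. The source pairs $\{(Z_i,A_i)\}$ are i.i.d.\ draws from $\mathcal D_S^{\Phi}$ by construction, with $\mathbb E_{\mathcal D_S^{\Phi}}[g_h]=\mathbb P_{\mathcal D_S^{\Phi}}(\mathcal I_h)$. For the target, because each $\widetilde A_i$ is drawn uniformly and independently of $Z_i$ and $\mathcal D_T^{\Phi}(z,a)=(1/K)\mathcal D^{\Phi}(z)$, the pairs $\{(Z_i,\widetilde A_i)\}$ are genuinely i.i.d.\ from $\mathcal D_T^{\Phi}$ (that they reuse the representations $Z_i$ of the source sample is harmless, since the two deviation bounds are only ever combined through a union bound). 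So Lemma~2, applied to the $[0,1]$‑valued function class $\mathcal G=\{g_h:h\in\mathcal H\}$ once over the source sample and once over the target sample — and, if a two‑sided bound is needed, also to the complemented class $\{1-g_h\}$ — controls each supremum above.

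The one genuinely substantive step is confirming that the same complexity constant $\mathcal M(n)=10\,\mathcal N_{\infty}(1/n,\mathcal H,2n)$ also governs $\mathcal G$. Here I would use that the action labels are discrete: on any sample the response vectors $(h(z_1),\dots,h(z_m))$, $h\in\mathcal H$, have integer coordinates, so for $n\ge16$ (hence $1/n<1/2$) any $(1/n)$‑cover in $\|\cdot\|_{\infty}$ must separate all distinct response vectors, i.e.\ $\mathcal N(1/n,\mathcal H(\boldsymbol z),\|\cdot\|_{\infty})=|\mathcal H(\boldsymbol z)|$. Since $(h(z_1),\dots,h(z_m))\mapsto(g_h(z_1,a_1),\dots,g_h(z_m,a_m))$ is a deterministic map once the $a_i$ are fixed (and $v\mapsto\mathbf 1-v$ is an $\ell_\infty$‑isometry, which also covers $\{1-g_h\}$), the response set of $\mathcal G$ on any sample has at most $|\mathcal H(\boldsymbol z)|$ elements, so its $(1/n)$‑covering number is at most $\mathcal N_{\infty}(1/n,\mathcal H,2n)\le\mathcal M(n)$; the factor $10$ leaves room to spare. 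Plugging these covering numbers into Lemma~2 and taking the union bound completes the argument. I expect the main obstacle to be exactly the confidence bookkeeping — keeping the failure probability at $2\delta$ with the stated constants while securing two‑sided control of both the source and target empirical measures under a single complexity term $\mathcal M(n)$; the triangle‑inequality skeleton, the i.i.d.\ check, and the covering‑number reduction are otherwise routine.
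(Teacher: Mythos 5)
Your proposal follows essentially the same route as the paper's own proof: two applications of the Maurer--Pontil empirical Bernstein bound (Lemma~2), one to the source sample and one to the uniformly relabeled target sample, combined by a union bound and the triangle inequality to pass from pointwise deviations of $\mathbb{P}(\mathcal{I}_h)$ to the $\mathcal{H}$-divergence. If anything you are more careful than the paper — you explicitly verify that $\{(Z_i,\widetilde A_i)\}$ is an i.i.d.\ sample from $\mathcal{D}_T^{\Phi}$, that the induced indicator class inherits the covering number $\mathcal{M}(n)$, and that the absolute value in $d_{\mathcal{H}}$ forces two-sided control (a point the paper's own proof glosses over by using only one-sided bounds, which is where its intermediate inequality is loose); your complemented-class device is a reasonable way to keep the failure probability at $2\delta$.
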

\begin{proof}
Define 
$
\beta(\delta, n) = \sqrt{\frac{18 \ln(\mathcal{M}(n)/\delta)}{n}} - \frac{15\ln(\mathcal{M}(n)/\delta)}{n}. 
$
By \cite{maurer2009empirical}, with probability $1 - \delta$, we have for each hypothesis $h \in \mathcal{H}$,
\begin{eqnarray*}
\mathbb{P}_{(z,a) \sim \mathcal{D}_T^{\Phi}}(\mathcal{I}_h) &\geq& \mathbb{P}_{(z,a) \sim \widehat{\mathcal{D}}_T^{\Phi}}(\mathcal{I}_h) - \beta(\delta, n) \\ 
\mathbb{P}_{(z,a) \sim \mathcal{D}_S^{\Phi}}(\mathcal{I}_h) &\leq& \mathbb{P}_{(z,a) \sim \widehat{\mathcal{D}}_S^{\Phi}}(\mathcal{I}_h)  + \beta(\delta, n)
\end{eqnarray*}
Hence, by union bound, the following equation holds for all $h \in \mathcal{H}$ with probability $1 - 2 \delta$:
\begin{multline*}
\left|\mathbb{P}_{(z,a) \sim \mathcal{D}_T^{\Phi}}(\mathcal{I}_h) - \mathbb{P}_{(z,a) \sim \mathcal{D}_S^{\Phi}}(\mathcal{I}_h)\right| \\
\geq \left|\mathbb{P}_{(z,a) \sim \widehat{\mathcal{D}}_T^{\Phi}}(\mathcal{I}_h) - \mathbb{P}_{(z,a) \sim \widehat{\mathcal{D}}_S^{\Phi}}(\mathcal{I}_h) - 2\beta(\delta,n)\right| 
\end{multline*}
The inequality still holds by taking supremum over $\mathcal{H}$ with $1 - 2 \delta$, that is, 
\begin{eqnarray*}
&&d_{\mathcal{H}}(\mathcal{D}_T^{\Phi}, \mathcal{D}_S^{\Phi}) \\
&\geq& \sup_{h \in \mathcal{H}} \left|\mathbb{P}_{(z,a) \sim \widehat{\mathcal{D}}_T^{\Phi}}(\mathcal{I}_h) - \mathbb{P}_{(z,a) \sim \widehat{\mathcal{D}}_S^{\Phi}}(\mathcal{I}_h) - 2\beta(\delta,n)\right|  \\ 
&\geq& d_{\mathcal{H}}(\widehat{\mathcal{D}}_T^{\Phi}, \widehat{\mathcal{D}}_S^{\Phi}) - 2 \beta(\delta, n).
\end{eqnarray*}
where the last inequality follows  from the triangle inequality.
\end{proof}

Finally, by combining Lemmas 1,2 and 3, we obtain a data-dependent bound on the counterfactual estimation error. 

\begin{theorem} 
Fix $\delta \in (0,1)$, $n \geq 16$. Let $\Phi$  be the representation function and let $\mathcal{H}$ be the set of policies. Then, with probability at least $1 - 3\delta$, we have for all $h \in \mathcal{H}$: 
\begin{eqnarray*}
V^{\Phi}(h) &\geq& \widehat{V}_S^{\Phi}(h) - k d_{\mathcal{H}}(\widehat{\mathcal{D}}_S^{\Phi}, \widehat{\mathcal{D}}_T^{\Phi}) \\ 
&-& 3k \bigg[\sqrt{\frac{18\ln(\mathcal{M}(n)/\delta)}{n}} - \frac{15\ln(\mathcal{M}(n)/\delta)}{n} \bigg]
\end{eqnarray*}
\end{theorem}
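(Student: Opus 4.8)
The plan is to chain Lemmas~1, 2 and~3 and to control the total failure probability by a single union bound. Lemma~1 is a deterministic inequality, so it contributes no failure probability. Lemma~2 holds on an event $E_S$ of probability at least $1-\delta$, and the proof of Lemma~3 invokes the Maurer--Pontil bound twice, producing an event $E_D$ of probability at least $1-2\delta$ on which $d_{\mathcal{H}}(\mathcal{D}_T^{\Phi},\mathcal{D}_S^{\Phi})$ and $d_{\mathcal{H}}(\widehat{\mathcal{D}}_T^{\Phi},\widehat{\mathcal{D}}_S^{\Phi})$ are close. On $E_S \cap E_D$, which by the union bound has probability at least $1 - 3\delta$, all of these estimates hold simultaneously and uniformly over $h \in \mathcal{H}$; the randomness is over the sample $\mathcal{D}^n$ together with the synthetic target labels $\widetilde{A}_i$, and this is the only randomness involved.

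Fix any $h \in \mathcal{H}$ and work on $E_S \cap E_D$. First I would invoke Lemma~1 to get $V^{\Phi}(h) \ge V_S^{\Phi}(h) - k\, d_{\mathcal{H}}(\mathcal{D}_T^{\Phi},\mathcal{D}_S^{\Phi})$. Next I would replace $V_S^{\Phi}(h)$ using Lemma~2, namely $V_S^{\Phi}(h) \ge \widehat{V}_S^{\Phi}(h) - \beta(\delta,n)$, with $\beta(\delta,n)$ the deviation term defined in the proof of Lemma~3. For the divergence term I would use Lemma~3 in the form $d_{\mathcal{H}}(\mathcal{D}_T^{\Phi},\mathcal{D}_S^{\Phi}) \le d_{\mathcal{H}}(\widehat{\mathcal{D}}_T^{\Phi},\widehat{\mathcal{D}}_S^{\Phi}) + 2\beta(\delta,n)$ --- this is the upper bound on the population divergence that is actually needed here, and it follows from the very same pair of Maurer--Pontil estimates used to prove Lemma~3 (now oriented so as to bound the population probabilities in terms of the empirical ones), followed by taking a supremum over $\mathcal{H}$ and the triangle inequality. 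Substituting both bounds into the Lemma~1 inequality and collecting terms gives
\[
V^{\Phi}(h) \ \ge\ \widehat{V}_S^{\Phi}(h) \ -\ k\, d_{\mathcal{H}}(\widehat{\mathcal{D}}_S^{\Phi},\widehat{\mathcal{D}}_T^{\Phi}) \ -\ (2k+1)\,\beta(\delta,n),
\]
using symmetry of $d_{\mathcal{H}}$ in its two arguments.

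Finally I would absorb the constant: there is at least one action, so $k \ge 1$ and hence $2k+1 \le 3k$, giving $(2k+1)\beta(\delta,n) \le 3k\,\beta(\delta,n)$ and therefore exactly the stated inequality. Since $h \in \mathcal{H}$ was arbitrary and every step is uniform over $\mathcal{H}$, the bound holds for all $h \in \mathcal{H}$ on the event $E_S \cap E_D$. The only delicate points are the bookkeeping of the union bound --- making sure the three uses of the concentration inequality (one in Lemma~2, two inside Lemma~3) cost exactly $3\delta$ and no more --- and orienting Lemma~3 so that it upper-bounds the population $\mathcal{H}$-divergence by its empirical version rather than the reverse; everything else is substitution and the trivial estimate $2k+1 \le 3k$.
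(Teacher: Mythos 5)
Your proposal is correct and follows exactly the route the paper intends: Theorem~1 is stated as a direct combination of Lemmas~1, 2 and~3 via a union bound, and your chaining, the $(2k+1)\le 3k$ absorption, and the $1-3\delta$ accounting all check out. You also rightly note the one subtlety the paper glosses over --- Lemma~3 as printed lower-bounds the population $\mathcal{H}$-divergence by the empirical one, whereas the theorem needs the opposite orientation, which follows from the same two Maurer--Pontil estimates applied in the other direction.
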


The result provided in Theorem 1 is constructive and motivates our optimization criteria. 

\begin{figure}[h]
\includegraphics[width = 0.5\textwidth]{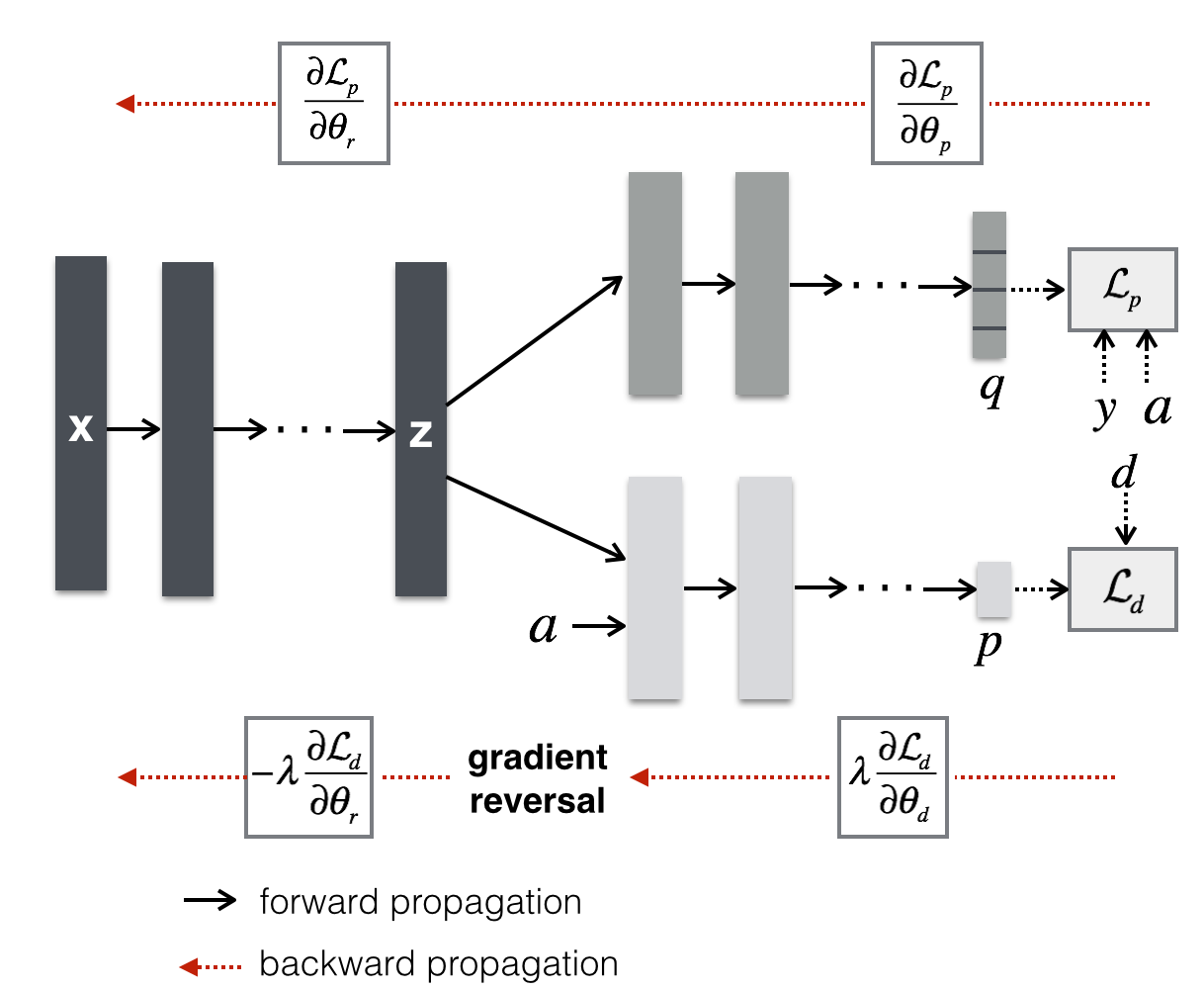}
\label{fig:sigma}
\caption{Neural network model based on~\cite{ganin2016domain}}
\end{figure}

\section{Counterfactual Policy Optimization (CPO)}
Theorem 1 motivates a general framework for designing policy learning from observational data with bandit feedback. A learning algorithm following this criterion  solves:
$$
\widehat{\Phi}, \widehat{h} = \arg\max_{\Phi, h} \; \widehat{V}_S^{\Phi}(h) - \lambda d_{\mathcal{H}}(\widehat{\mathcal{D}}_S^{\Phi}, \widehat{\mathcal{D}}_T^{\Phi}),
$$
where $\lambda > 0$ is the trade-off parameter between the empirical policy outcome in the source data and the empirical $\mathcal{H}$-divergence between the source and target distributions. This optimization criterion  seeks to find a representation function where  the source and the target domain are indistinguishable. Computing the empirical $\mathcal{H}$-divergence between the source and target distributions is known to be NP-hard ~\cite{ganin2016domain}, but we can use recent developments in  domain adversarial neural networks to find a good approximation.  

\subsection{Domain Adversarial Neural Networks}
In this paper, we follow the recent work in domain adversarial training of  neural networks~\cite{ganin2016domain}. For this, we need samples from observed data - sometimes referred to as source data ($\mathcal{D}_S$) - and unlabeled samples from an ideal dataset - referred to as target data ($\mathcal{D}_T$). As mentioned, we don't have samples from an ideal dataset. Hence, we'll first talk about batch sampling of source and target from our dataset $\mathcal{D}$. Given a batch size of $m$, we randomly sample from $\mathcal{D}$ and set domain variable $d = 0$ indicating this is the source data. Then, we sample $m$ additional samples excluding the samples from the source data and randomly assign an action according to the distribution 
$\operatorname{Multinomial}(\left[1/K, \ldots, 1/K\right])$;  finally, we set the domain variable $d = 1$ indicating this is the target data. The batch generation procedure is depicted in Algorithm 1. 

\begin{algorithm}[t]
\caption{Procedure: $\text{Generate}-\text{Batch}$}
	\label{alg:gen_batch}
	\normalsize
	\begin{algorithmic}[1]
		\STATE Input: Data: $\mathcal{D}_n$, Batch size: $m$
		\STATE Sample $\mathcal{U} = \{u_1, \ldots, u_m\} \subset \mathcal{N} =\{1,\ldots,n \}$.
		\STATE Set source set $\mathcal{S} = \{ (x_{u_i}, a_{u_i}, y_{u_i}, d_{i} =0) \}_{i=1}^m$.
		\STATE Sample $\mathcal{V} =\{ v_1, \ldots, v_m\} \subset \mathcal{N} \setminus \mathcal{U}$. 
		\STATE Set $\mathcal{T} = \emptyset$
		\FOR{i = 1, \ldots, m:}
		\STATE Sample $\widetilde{a}_i \sim \operatorname{Multinomial}([1/K, \ldots, 1/K])$. 
		\STATE $\mathcal{T} = \mathcal{T} \cup \{ (x_{v_i}, \widetilde{a}_i, d_i = 1) \}$. 
		\ENDFOR
		\STATE Output: $\mathcal{S}$, $\mathcal{T}$. 
	\end{algorithmic}
\end{algorithm}

Our algorithm consists of three blocks: representation, domain and policy blocks. In the representation block, we seek to find a map $\Phi: \mathcal{X} \rightarrow \mathcal{Z}$ combining two objectives: (i) high predictive power on the outcomes, (ii) low predictive power on the domain. Let $F_r$ denote a parametric function that maps the patient features to representations, that is, $z_i = F_r(x_i; \theta_r)$ where $\theta_r$ is the parameter vector of the representation block. The representations are input to both survival and policy blocks. Let $F_p$ denote the mappings from representation-action pair $(z_i, a_i)$ to probabilities over the actions $\hat{q}_i = \left[ \hat{q}_{i,0}, \ldots, \hat{q}_{i, K-1}\right]$, i.e., $\hat{q}_i = F_p(z_i, a_i; \theta_p)$ where $\theta_p$ is the parameter vector of the policy block. For an instance with features $x_i$ and action $a_i$, an element in output of policy block $\hat{q}_{i,a}$ is the probability of recommending action $a$ for subject $i$. The estimated policy outcome in source data is then given by
$$
\widehat{V}_{S}^{\Phi}(h) = \frac{1}{n} \sum_{i=1}^n \frac{y_i q_{a_i}}{1/k}.
$$
Although our theory applies only to deterministic policies, we will allow for stochastic policies in order to make the optimization problem tractable. This is not optimal; however, as we'll show in our numerical results, this approach is still able to achieve significant gains with respect to benchmark algorithms. Let $G_d$ be a mapping from representation-action pair $(z_i, a_i)$ to probability of the instances generated from target, i.e., $\hat{p}_i = G_d(z_i, a_i; \theta_d)$ where $\theta_d$ is the parameters of the domain block. 

Note that the last layer of the policy block is a softmax operation, which has exponential terms. Instead of directly maximizing 
$\widehat{V}_{S}(h)$, we use a modified cross-entropy loss to make the optimization criteria more robust. The policy loss is then 
$$
\mathcal{L}_p^i(\theta_r, \theta_s) = \frac{-y_i \log(q_{i, a_i})}{1/k}
$$
At the testing stage, we can then convert these probabilities to action recommendations simply by recommending the action with highest probability $q_{i,a}$. We set the domain loss to be the standard cross entropy loss between the estimated domain probability $p_i$ and the actual domain probability $d_i$; this is the standard classification loss used  in the literature and is given by 
$$
\mathcal{L}_d^i(\theta_r, \theta_s) = d_i \log(p_i) + (1 - d_i) \log p_i.
$$


Our goal in this paper to find the saddle point that optimizes the weighted sum of the survival and domain loss. This total loss is given by
\begin{eqnarray*}
\mathcal{E}(\theta_r, \theta_s, \theta_d) &=& \sum_{i \in \mathcal{S}} \mathcal{L}_s^i(\theta_r, \theta_s) \notag \\ 
&\;&\;\; - \lambda \left( \sum_{i \in \mathcal{S}} \mathcal{L}_d^i(\theta_r, \theta_d) + \sum_{i \in \mathcal{T}} \mathcal{L}_d^i(\theta_r, \theta_d) \right) 
\end{eqnarray*}
where $\lambda > 0$ is the trade-off between survival and domain loss. The saddle point is
\begin{eqnarray*}
\left(\hat{\theta}_{r}, \hat{\theta}_{s} \right) &=& \arg\min_{\theta_{r}, \theta_{p}} \; \mathcal{E}(\theta_{r}, \theta_p, \hat{\theta}_d), \notag \\ 
\hat{\theta}_d &=& \arg\max_{\theta_{d}} \; \mathcal{E}(\hat{\theta}_{r}, \hat{\theta}_p, \theta_d). 
\end{eqnarray*}
The training procedure of the Domain Adverse training of Counterfactual POLicy training (DACPOL)  is depicted in Algorithm 2. The neural network architecture is depicted in Figure 1. 

For a test instance with covariates $x^{*}$, we compute the action recommendations with the following procedure: We first compute the representations by $z^{*}= G_r(x^{*}; \theta_r)$, then compute the action probabilities $q^{*} = F_p(z^{*}, \theta_p)$. We finally recommend the action with $\hat{A}(x^{*}) = \arg\max_{a \in \mathcal{A}} q^{*}_{a}$.

\begin{algorithm}[t]
\caption{Training Procedure: $\text{DACPOL}$}
	\label{alg:gen_batch}
	\normalsize
	\begin{algorithmic}[1.5]
		\STATE Input: Data: $\mathcal{D}$, Batch size: $m$, Learning rate: $\mu$
		\STATE $(\mathcal{S}, \mathcal{T}) = \text{Generate-Batch}(\mathcal{D}, m)$. 
		\FOR{until convergence}
		\STATE Compute $\mathcal{L}_p^{\mathcal{S}}( \theta_{r}, \theta_s) = \frac{1}{|\mathcal{S}|} \sum_{i \in \mathcal{S}} \mathcal{L}_p^i( \theta_{r}, \theta_s)$
		\STATE Compute $\mathcal{L}_d^{\mathcal{S}}(\theta_{r}, \theta_d) = \frac{1}{|\mathcal{S}|} \sum_{i \in \mathcal{S}}\mathcal{L}_d^i(\theta_{r}, \theta_d)$
		\STATE Compute $\mathcal{L}_d^{\mathcal{T}}(\theta_{r}, \theta_d) = \frac{1}{|\mathcal{T}|} \sum_{i \in \mathcal{T}}\mathcal{L}_d^i(\theta_{r}, \theta_d)$
		\STATE Compute $\mathcal{L}_d(\theta_{r}, \theta_d) = \mathcal{L}_d^{\mathcal{S}}(\theta_{r}, \theta_d) + \mathcal{L}_d^{\mathcal{T}}(\theta_{r}, \theta_d)$
		\STATE $\theta_{r} \rightarrow \theta_{r} - \mu \left (\frac{\partial \mathcal{L}_s^{\mathcal{S}}( \theta_{r}, \theta_s)}{\partial \theta_{r}} -\lambda \frac{\partial \mathcal{L}_d( \theta_{r}, \theta_d)}{\partial \theta_{r}} \right)$
		\STATE $\theta_p \leftarrow \theta_p - \mu \frac{\partial \mathcal{L}_p^{\mathcal{S}}( \theta_{r}, \theta_s)}{\theta_s}$
		\STATE $\theta_d \leftarrow \theta_d - \mu \frac{\partial \mathcal{L}_d( \theta_{r}, \theta_d)}{\theta_d}$
		\ENDFOR
	\end{algorithmic}
\end{algorithm}

\section{Numerical Results}
Here we describe the performance of our algorithm. Note that it is difficult (almost impossible) to test and validate the algorithm on real data with missing counterfactual survival outcomes. In this paper, we provide results both on a semi-synthetic breast cancer and a supervised UCI dataset (Statlog). 

\subsection{Dataset Description}
\textbf{Breast cancer dataset: }
The dataset includes $10,000$ records of breast cancer patients participating in the National Surgical Adjuvant Breast and Bowel Project (NSABP); see \cite{yoon2016discovery}. Each instance consists of the following information about the patient: age, menopausal, race, estrogen receptor, progesterone receptor, human epidermal growth factor receptor 2 (HER2NEU), tumor stage, tumor grade, Positive Axillary Lymph Node Count(PLNC), WHO score, surgery type, Prior Chemotherapy, prior radiotherapy and histology. The treatment is a choice among six chemotherapy regimens of which only $5$ of them are used: AC, ACT, CAF, CEF, CMF. The outcomes for these regimens were derived based on 32 references from PubMed Clinical Queries. The data contains the feature vector $x$ and all derived outcomes for each treatment $\{ Y_t \}_{t \in \mathcal{T}}$. 

\textbf{UCI Statlog Dataset:}
This dataset includes the multi-spectral values of pixels in a satellite image. The feature vector contains $36$ pixel values and the aim is to predict the true description of the plot (barren soil, grass, cotton crop, etc.) from the feature vector. We follow the same procedure summarized in \cite{beygelzimer2009offset}. That is, we treat each label as an action and set the outcome of the action which matches with the label as $1$ and the rest as $0$.

\vspace{-.5em}

\subsection{Experimental Setup}
We generate an artificially biased dataset $\mathcal{D}^n =\{ (X_i, A_i, Y_i)\}$ by the following procedure: (i) we first draw  random weights $W \in \mathbb{R}^{s \times k}$ with $w_{j,a} \sim \mathcal{N}(0, \sigma I)$ where $\sigma > 0$ is a parameter used to generate datasets with different selection bias levels. We generate actions in the data according to the logistic distribution $A \sim \exp(x^T w_a)/ (\sum_{a \in \mathcal{A}} \exp(x^T w_a))$.

For the breast cancer data set, we generate a $56/24/20$ split of the data to train, validate and test our DACPOL. For the Statlog data, we spare $30\%$ of the training data for validation and use the testing set provided to evaluate our algorithm. The hyperparameter list we used in our validation set is $10^\gamma/2$ with $\gamma \in \left[ -4, -3, -2, -1,  0, 0.5, 0.75, 1, 1.5, 2, 3\right]$. We generate $100$ different datasets by following the procedure described above and report the average and $95\%$ confidence levels.

The performance metric we use to evaluate our algorithm in this paper is loss, which we define to be $1 - \text{accuracy}$; accuracy is defined as the fraction of test instances in which the recommended and best action match.  Note that we can evaluate the accuracy metric since we have the ground truth outcomes in the testing set, but of course the ground truth outcomes are not used by any algorithm in the training and validation test. In our experiments, we use 1-1-2 representation/domain/outcome fully-connected layers. The neural network is trained by back propagation via Adam Optimizer~\cite{kingma2014adam}  with an initial learning rate of $.01$. We begin with an initial learning rate $\mu$ and tradeoff parameter $\lambda$ and  use iterative adaptive parameters to get our result; along the way we decrease the learning rate $\mu$ and increases the tradeoff parameter. This is standard procedure in training domain adversarial neural networks~\cite{ganin2016domain}. We implement DACPOL in the Tensorflow environment. 

\vspace{-.5em}

\subsection{Benchmarks} 
We compare performance of DACPOL with two benchmarks
\begin{itemize}
\item \textbf{POEM}~\cite{swaminathan15counterfactual}  is a linear policy optimization algorithm which minimizes the empirical risk of IPS estimator and variance. 
\item \textbf{IPS} is  POEM without  variance regularization. 
\end{itemize}
Both IPS and POEM deal with the selection bias in the data by using the propensity scores. Note that DACPOL does not require the propensity scores to be known in order to address the selection bias. Hence, in order to make fair comparisons, we estimate the propensity scores from the data, and use these estimates in IPS and POEM. 

\vspace{-.5em}

\subsection{Results}
\subsubsection{Comparisons with the benchmarks}
Table 2 shows the discriminative performance of DACPOL (in which we optimize $\lambda$) and DACPOL(0) (in which we set $\lambda = 0$) with the  benchmark algorithms. We compare the algorithms in breast cancer data with $\sigma = 0.3$ and Statlog data with $\sigma = 0.2$. As seen from the table, our algorithm outperforms the benchmark algorithms in terms of the loss metric defined above. The empirical gain with respect to POEM algorithm has three sources: (i) DACPOL does not need propensity scores, (ii) DACPOL optimizes over all policies not just  linear policies, (iii) DACPOL trades off between the predictive power and bias introduced by the features. (We further illustrate the last source of gain in the next subsection with a toy example.)

\begin{table}[h] 
\normalsize
\centering 
\begin{tabular}{|c|c|c|} 
\hline 
Algorithm & Breast Cancer & Statlog    \\ \hline
DACPOL & $.292 \pm .006$ & $.249 \pm .015$ \\ \hline
DACPOL(0)  & $.321 \pm .006$ & $.261 \pm  .020$ \\ \hline \hline
POEM & $.394 \pm .004$ & $.432 \pm .016$  \\ \hline
IPS & $.397 \pm .004$ & $.454 \pm .017$ \\ \hline
\end{tabular} 
\caption{Loss Comparisons for Breast Cancer and Statlog Dataset; Means and 95\% Confidence Intervals} 
\label{table:res_benchmarks}
\end{table}

\vspace{-1em}

\subsubsection{Domain Loss and Policy Loss}

The hyperparameter $\lambda$ controls the domain loss in the training procedure. As  $\lambda$ increases, the domain loss in training DACPOL increases; eventually source and target become indistinguishable, the representations become balanced, and  the loss of  DACPOL reaches a minimum.  If we increase $\lambda$ beyond that point, the algorithm classifies the source as the target and the target as the source,  representations become unbalanced, and  the the loss of DACPOL increases again.  Figure 2 illustrates this effect for the breast cancer dataset.  

\begin{figure}[h]
\includegraphics[width = 0.5\textwidth]{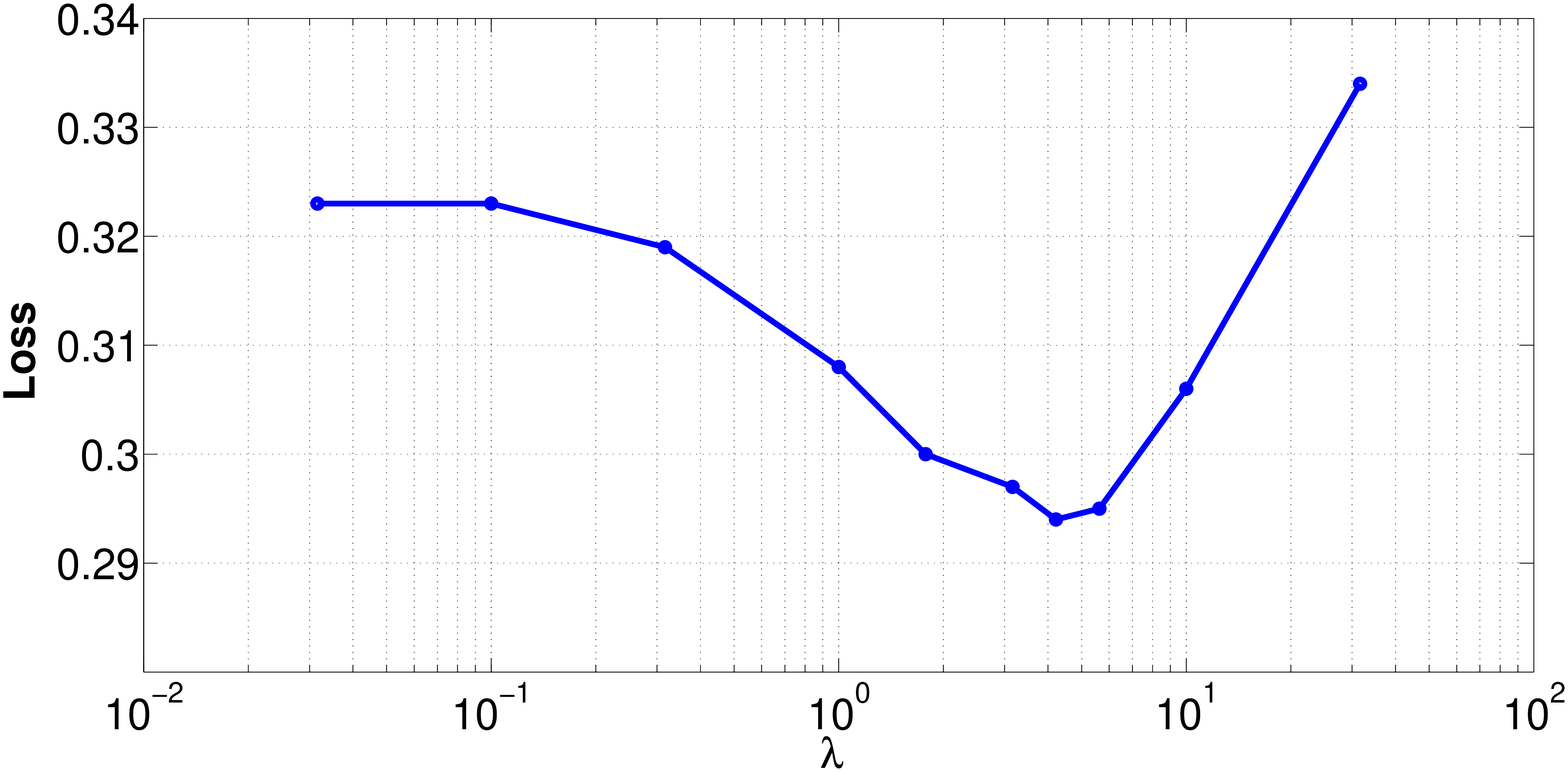}
\caption{The effect of domain loss in DACPOL performance}
\label{fig:sigma}
\end{figure}

\vspace{-1.2em}

\subsubsection{The effect of  selection bias in DACPOL}
In this subsection, we show the effect of the selection bias in the performance of our algorithm by varying the parameter 
$\sigma$ in our data generation process: a larger value of  $\sigma$ creates more biased data. Figure 3 shows two important points: (i) as the selection bias increases, the loss of  DACPOL increases, (ii) as the selection bias increases, domain adversarial training becomes more efficient, and hence the improvement of DACPOL over DACPOL(0) increases.

\begin{figure}[h]
\includegraphics[width = 0.5\textwidth]{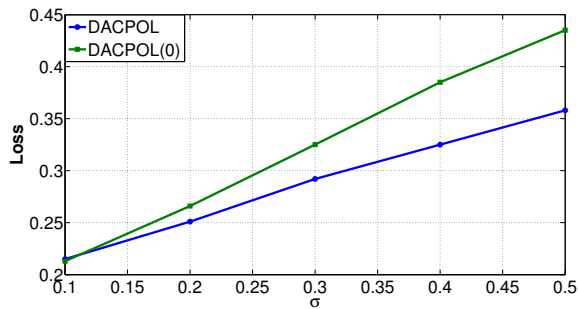}
\label{fig:sigma}
\caption{The effect of selection bias in DACPOL performance}
\end{figure}

\vspace{-0.5em}

\subsubsection{Empirical Gains with respect to CRM }

In this subsection, we show an advantage of our CPO principle over the CRM principle: selection bias from irrelevant features is less important. This happens because our representation optimization is able to remove the effect of the irrelevant features in the outputted representations and then uses only the relevant features to directly estimate the policy outcome as if it had access to randomized data. However, the performance of the CRM principle (whose objective is to maximize the IPS estimator minus the variance of the policy outcome) decreases with additional irrelevant features, because the inverse propensities due to irrelevant features  become large, and hence the variance of the IPS estimator will also become large. To see this, we use a toy example. We begin with 15 relevant features $x$.  We then generate $d$ additional irrelevant features $z \sim \mathcal{N}(0, I)$.  We create a logging policy that depends only on the irrelevant features using the logistic distribution. As $d$ increases, the selection bias also increases.  As Figure 4 shows, POEM is  more sensitive than DACPOL to this increase in the selection bias.  

\begin{figure}[h]
\includegraphics[width = 0.5\textwidth]{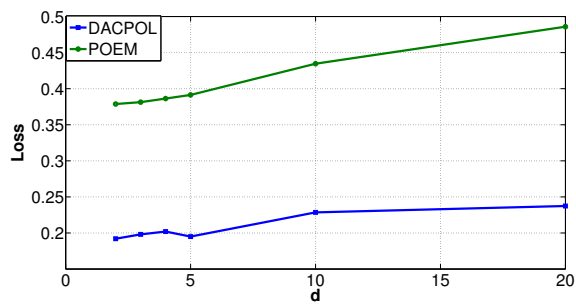}
\caption{The effect of irrelevant features in DACPOL vs POEM}
\end{figure}

\vspace{-2em}

\section{Conclusion} \vspace{-.5em}
This paper presented estimation bounds on the error between actual and estimated policy outcomes from observational data. Our theoretical results show that the estimation error from observational data depends on the $\mathcal{H}$-divergence between the observational and randomized data. This result motivated the development of a domain adversarial neural network to learn an optimal policy from observational data. We illustrated various features of our algorithm semi-synthetic and real data. Future work includes multi-stage actions, time-varying features etc. 

\bibliographystyle{icml2018}
\bibliography{NIPSrelevant}

\end{document}